\theoremstyle{plain}
\newtheorem{theorem}{Theorem}[section]
\newtheorem{lemma}[theorem]{Lemma}
\theoremstyle{definition}
\newtheorem{definition}[theorem]{Definition}
\theoremstyle{remark}
\DeclareMathOperator*{\argmin}{\arg\!\min}
\DeclareMathOperator{\proj}{proj}
\renewcommand{\vec}[1]{\mathbf{#1}}
\newcommand{\mat}[1]{\mathbf{#1}}
\newcommand{\x}{\vec{x}}
\newcommand{\z}{\vec{z}}
\newcommand{\y}{\vec{y}}
\newcommand{\w}{\vec{w}}
\newcommand{\J}{\mat{J}}
\def\T{^{\intercal}} 
\def\R{\mathbb{R}}  
\newcommand{\M}{\mathcal{M}} 
\newcommand{\inv}{\ensuremath{^{-1}}}
\newcommand{\inner}[2]{\ensuremath{\left\langle #1, #2\right\rangle}}
\newcommand{\norm}[1]{\left\lVert#1\right\rVert}
\DeclareMathOperator*{\Unp}{Unp}
\DeclareMathOperator*{\reach}{reach}
\DeclareMathOperator*{\dual}{dual}
\runningauthor{Hauschultz, Palm, Moreno-Muñoz, Detlefsen, du Plessis, Hauberg}
\begin{document}
\twocolumn[

\aistatstitle{Is an encoder within reach?}

\aistatsauthor{
  Helene Hauschultz\\   Department of Mathematics,\\
  Aarhus University \\
  \texttt{hhauschultz@math.au.dk} 
  \And
  Rasmus Berg Palm \\   \texttt{rasmusbergpalm@gmail.com}
  \And
  Pablo Moreno-Muñoz\\   Technical University of Denmark, \\
  DTU Compute \\
  \texttt{pabmo@dtu.dk}
}
\aistatsauthor{
  Nicki Skafte Detlefsen\\   Technical University of Denmark, \\
  DTU Compute \\
  \texttt{nsde@dtu.dk}
  \And
  Andrew Allan du Plessis\\  Department of Mathematics,\\
  Aarhus University \\
  \texttt{matadp@math.au.dk}
  \And
  Søren Hauberg \\   Technical University of Denmark, \\
  DTU Compute \\
  \texttt{ sohau@dtu.dk}
}
 ]

\begin{abstract}
 The encoder network of an autoencoder is an approximation of the nearest point projection onto the manifold spanned by the decoder. A concern with this approximation is that, while the output of the encoder is always unique, the projection can possibly have infinitely many values. This implies that the latent representations learned by the autoencoder can be misleading. Borrowing from geometric measure theory, we introduce the idea of using the reach of the manifold spanned by the decoder to determine if an optimal encoder exists for a given dataset and decoder. We develop a local generalization of this reach and propose a numerical estimator thereof. We demonstrate that this allows us to determine which observations can be expected to have a unique, and thereby trustworthy, latent representation. As our local reach estimator is differentiable, we investigate its usage as a regularizer and show that this leads to learned manifolds for which projections are more often unique than without regularization.
\end{abstract}

\section{Encoders as projectors}
  A good \emph{learned representation} has many desiderata \citep{bengio2013representation}. The perhaps most elementary constraint placed over most learned representations is that a given observation $\x$ should have a \emph{unique} representation $\z$, at least in distribution. In practice this is ensured by letting the representation be given by the output of a function, $\z = g(\x)$, often represented with a neural network. 

  The \emph{autoencoder} \citep{rumelhart1986learning} is an example where uniqueness of representation is explicitly enforced, even if its basic construction does not suggest unique representations. In the most elementary form, the autoencoder consists of an \emph{encoder} $g_{\psi}: \R^D \rightarrow \R^d$ and a \emph{decoder}
  $f_{\phi}: \R^d \rightarrow \R^D$, parametrized by $\psi$ and $\phi$, respectively.
  These are trained by minimizing the \emph{reconstruction error} of the training
  data $\{ \x_1, \ldots, \x_N \}$,
  \begin{align}
    \psi^*, \phi^* = \argmin_{\psi, \phi} \sum_{n=1}^N \| f_{\phi}(g_{\psi}(\x_n)) - \x_n \|^2.
  \end{align}
  Here $d$ is practically always smaller than $D$, such that the output of the encoder
  is a low-dimensional latent representation of high-dimensional data. The data is assumed
  to lie near a $d$-dimensional manifold $\M$ spanned by the decoder.
  
  \begin{figure}
      \centering
      \includegraphics[width=0.4\textwidth]{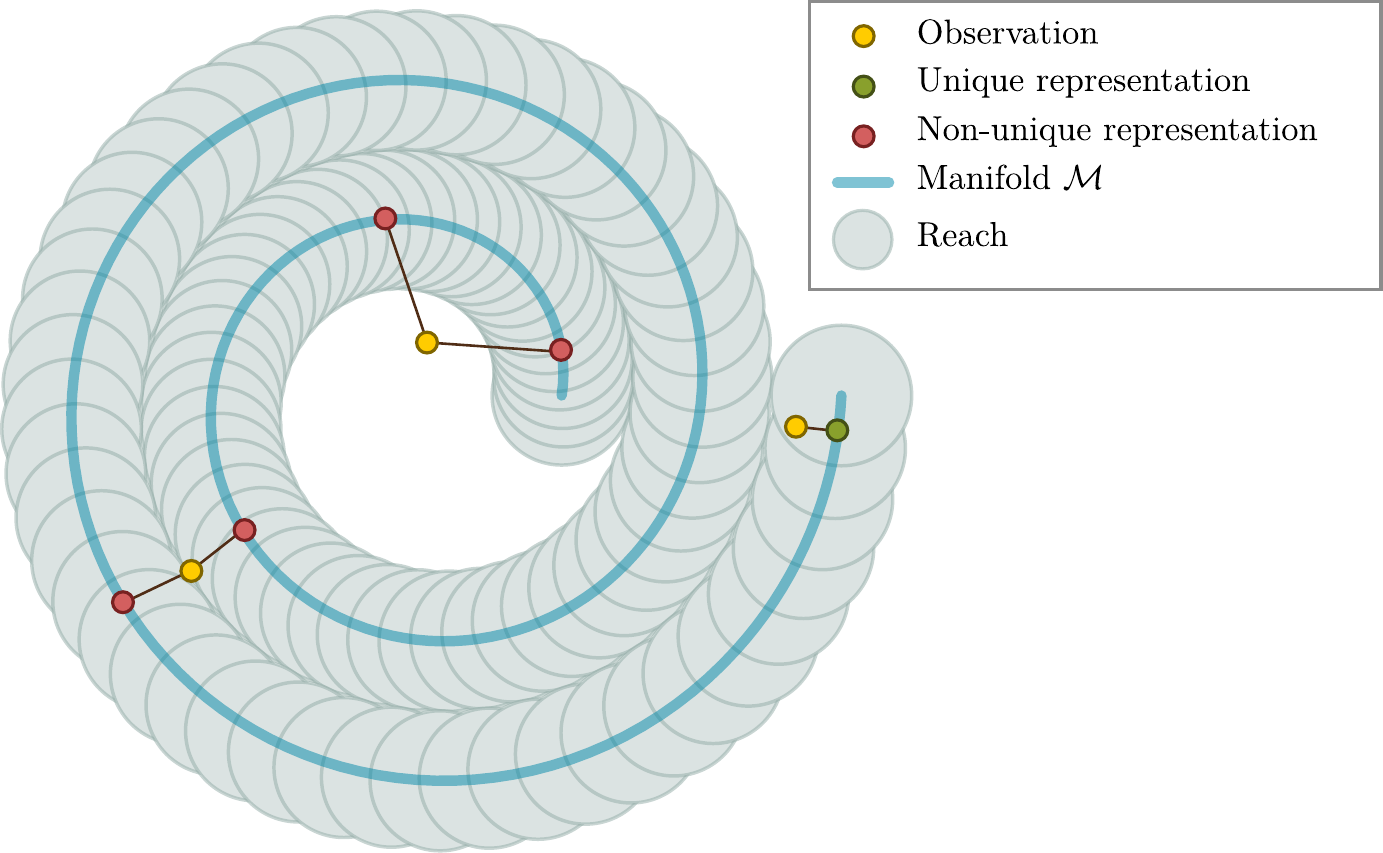}
      \caption{The projection of a point (yellow) onto a nonlinear manifold can take
        unique (green) or multiple values (red) depending on the reach of the manifold. When training data is inside the reach, the encoder can match the projection resulting in more trustworthy representations.}
      \label{fig:encoding_vs_projecting}
  \end{figure}

  For a given decoder, we see that the optimal choice of the encoder is the
  projection onto $\M$, i.e.
  \begin{align}
    g_{\text{optimal}}(\x) &= \proj_{\M}(\x) = \argmin_{\z} \| \x - f(\z) \|^2.
    \label{eq:proj_encoder}
  \end{align}
  For any \emph{nonlinear} choice of decoder $f$, this optimal encoder does \emph{not} exists everywhere. That is, multiple best choices of latent representation may exist for a given point, as the projection is not unique everywhere. As the learned encoder enforces a unique representation, it will choose arbitrarily among the potential representations (see Fig.~\ref{fig:encoding_vs_projecting}). 
  In this case, any analysis of the latent representations can be misleading, as it does not contain the information that another choice of representation would be equally good.
  
  \textbf{But does uniqueness of representations matter?}
  Learned latent representations are used for a variety of tasks, most of which implicitly rely on the representations being unique. The simplest use case of learned representations is \emph{visualization}, i.e.\@ a scatter plot of the latent coordinates. Such plots are often used to form scientific hypotheses about the mechanics of the phenomena that generated the data, e.g.\@ \emph{protein evolution} \citep{riesselman2018deep, detlefsens:proteins:2022}, or \emph{identifying unexplored molecular structures} \citep{sattarov2019novo}. Scatter plots explicitly assume uniqueness of representations (one dot per observation), yet the non-uniqueness of projections \eqref{eq:proj_encoder} suggests that this assumption does not have mathematical backing.
  
  Another common use case is \emph{latent space statistics}. For example, it is common to perform \emph{clustering} of high-dimensional data by finding low-dimensional representations, which are then clustered (either during training or post hoc; see e.g.\@ recent surveys \citep{min2018survey}). This may take the form of $k$-means-style latent clustering \citep{hadipour2022deep}, which assumes that representation averages are well-defined. Another example is \emph{Bayesian optimization} \citep{movckus1975bayesian, stanton2022accelerating} over the latent representations. This assumes the ability to fit stochastic processes to the latent representations. Both of these examples rely on the ability to perform statistical calculations with respect to the learned representations. Unfortunately, practically all statistical calculations rely on the assumption that observation representations are unique. For example, the average of a set of observations with non-unique representations is ill-defined; see e.g.\@ the celebrated work of \citet*{billera2001geometry} for an excellent discussion of this issue.

  The above examples of assuming unique representations are ever-present throughout the literature, yet the mathematical justification is lacking. We investigate methods for ensuring uniqueness, but one could alternatively fully embrace the lack of uniqueness. The latent representation of a single observation would in this case form a set rather than a vector. We do not investigate this direction but note that working with sets is feasible \citep{zaheer2018deep} albeit somewhat more complicated than vectorial representations.
  

  
  \textbf{In this paper} we investigate the \emph{reach} of the manifold $\M$ spanned
  by the decoder $f$. This concept, predominantly studied in geometric measure theory, informs us
  about regions of observation space where the projection onto $\M$ is unique,
  such that trustworthy unique representations exist. If training data resides inside
  this region we may have hope that a suitable encoder can be estimated, leading
  to trustworthy representations. The classic \emph{reach} construction is global in nature,
  so we develop a local generalization that gives a more fine-grained estimate of the
  uniqueness of a specific representation. We provide a new local, numerical, estimator of
  this reach, which allows us to determine which observations can be expected to have unique
  representations, thereby allowing investigations of the latent space to disregard
  observations with non-unique representations. Empirically we find that in large autoencoders, practically all data is outside the reach and risk not having a unique representation. To counter this, we design a reach-based regularizer that penalizes decoders for which unique representations of given data do not exist. Empirically,  this significantly improves the guaranteed uniqueness of representations with only a small penalty in reconstruction error.
  
 \section{Reach and uniqueness of representation}


Our starting question is \emph{which observations $\x$ have a unique representation $\z$ for a given decoder $f$?}  To answer this, we first introduce the \emph{reach} \citep{federer:1959} of the manifold spanned by decoder $f$. This is a \emph{global} scalar that quantifies how far points can deviate from the manifold while having a unique projection. Secondly, we contribute a generalization of this classic geometric construct to characterize the local uniqueness properties of the learned representation.

\subsection{Defining reach}

The nearest point projection $\proj_{\M}$ (Eq.~\ref{eq:proj_encoder}) is a well-defined \emph{function}\footnote{We here stress that a function always returns a single output for a given input.} on all points for which there exists a unique nearest point. We denote this set
\begin{align*}
\Unp(\M) = \{\x\in \R^D : \x\text{ has a unique nearest point in }\M\},
\end{align*}
where $\M = f(\R^d)$ is the manifold spanned by mapping the entire latent space through the decoder. Observations that lie within $\Unp(\M)$ are certain to have a unique optimal representation, but there is no guarantee that the encoder will recover this. With the objective of characterizing the uniqueness of representation, the set $\Unp(\M)$ is a good starting point as here the encoder at least has a chance of finding a representation similar to that of a projection.
However, for an arbitrary manifold $\M$ it is generally not possible to explicitly find the set $\Unp(\M)$. Introduced by \citet{federer:1959}, the \emph{reach} of $\M$ provides us with an implicit way to understand which points are in and outside $\Unp(\M)$.

  \begin{definition}\label{def:reach}
    The \emph{global reach} of a manifold $\M$ is 
    \begin{align}
      \reach(\M) = \inf_{\x\in \M} r_{\text{max}}(\x),
    \end{align}
    where
    \begin{align}
      r_{\text{max}}(\x) = \sup\{r> 0 : B_r(\x)\subset \Unp(\M)\}.
    \end{align}
    Here $B_r$ denotes the open ball of radius $r$.
  \end{definition}
  Hence, $\reach(\M)$ is the greatest radius $r$ such that any open $r$-ball centered on the manifold lies in $\Unp(\M)$. 
  In the existing literature, the \emph{global reach} is referred to as the \emph{reach}; we emphasize the global nature of this quantity as we will later develop local counterparts. 

  Definition~\ref{def:reach} does not immediately lend itself to computation. Fortunately, \citet{federer:1959} provides a step in this direction, through the following result. 
  \begin{theorem}[\citet{federer:1959}]
    Suppose $\M$ is a manifold, then
    \begin{align}\label{eq:fed_reach_calc}
      \reach(\M) = \inf_{\substack{ \x, \y \in \M\\ \y-\x\notin T_\x\M}}   \frac{\norm{\x-\y}^2}{2\norm{P_{N_\x\M}(\y-\x)}},
    \end{align}
    where $P_{N_\x\M}$ is the orthogonal projection onto the normal space of $\M$ at $\x$. If $\y-\x\in T_\x\M$ for all pairs $\x,\y\in \M$ we let $\reach(\M) = \infty$, as $\M$ will be flat and the projection is unique everywhere.
  \end{theorem}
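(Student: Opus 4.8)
The plan is to establish the identity by proving two matching inequalities between $\reach(\M)$ and the right-hand side, which I abbreviate $R$, the infimum of $\frac{\norm{\x-\y}^2}{2\norm{P_{N_\x\M}(\y-\x)}}$ over all \emph{admissible} pairs (those with $\y-\x\notin T_\x\M$, so that the denominator is nonzero). The engine of both directions is a single computational lemma about a point with two nearest points. Suppose $\vec{c}\in\R^D$ has two distinct nearest points $\x,\y\in\M$, both at distance $s = d(\vec{c},\M)$. First-order optimality of $\x$ as a minimizer of $\z\mapsto\norm{\vec{c}-\z}^2$ on $\M$ forces the displacement to be normal, $\vec{c}-\x \in N_\x\M$. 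Expanding $\norm{\x-\y}^2 = \norm{(\x-\vec{c})+(\vec{c}-\y)}^2$ and using the equidistance $\norm{\vec{c}-\x}=\norm{\vec{c}-\y}=s$ gives the identity
\begin{align}
\norm{\x-\y}^2 = 2\inner{\vec{c}-\x}{\y-\x}.
\end{align}
Since $\vec{c}-\x$ is normal, I may replace $\y-\x$ by its normal component, $\inner{\vec{c}-\x}{\y-\x} = \inner{\vec{c}-\x}{P_{N_\x\M}(\y-\x)}$; this is strictly positive because $\x\neq\y$ (so $(\x,\y)$ is admissible) and at most $s\,\norm{P_{N_\x\M}(\y-\x)}$ by Cauchy--Schwarz. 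Rearranging yields the key estimate $\frac{\norm{\x-\y}^2}{2\norm{P_{N_\x\M}(\y-\x)}}\le s$: every non-unique point at distance $s$ exhibits an admissible pair whose ratio is at most $s$.

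The inequality $\reach(\M)\ge R$ then follows almost immediately. I would show $\{\z : d(\z,\M) < R\}\subseteq \Unp(\M)$: if some $\vec{c}$ with $d(\vec{c},\M)=s<R$ had two nearest points, the lemma would produce an admissible pair with ratio $\le s < R$, contradicting the definition of $R$ as an infimum. Hence the open $R$-tube lies in $\Unp(\M)$, so $r_{\text{max}}(\x)\ge R$ for every $\x\in\M$, and thus $\reach(\M)\ge R$. The same reasoning handles the degenerate case: if there are no admissible pairs at all (so $R=\inf\emptyset=\infty$), then by the lemma no point can have two nearest points, $\Unp(\M)=\R^D$, and $\reach(\M)=\infty$, matching the stated convention for flat $\M$.

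For the reverse inequality I would show $\reach(\M)\le\frac{\norm{\x-\y}^2}{2\norm{P_{N_\x\M}(\y-\x)}}$ for each admissible pair and then take the infimum. Fixing such a pair, set $\nu = P_{N_\x\M}(\y-\x)/\norm{P_{N_\x\M}(\y-\x)}$ and let $\rho$ denote the ratio, then slide the center $\vec{c}_t=\x+t\nu$ out along the normal ray. A direct expansion shows $\norm{\vec{c}_\rho-\y}=\norm{\vec{c}_\rho-\x}=\rho$, so $\y$ becomes tied with $\x$ exactly at $t=\rho$, and $\y$ is strictly closer than $\x$ for $t>\rho$. Since $\x$ is the unique nearest point for small $t>0$, the threshold $t^* = \sup\{t\le\rho : \x \text{ is the unique nearest point of } \vec{c}_t\}$ satisfies $0<t^*\le\rho$, and $\vec{c}_{t^*}$ is a non-unique point at distance $t^*\le\rho$, giving $\reach(\M)\le t^*\le\rho$.

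The main obstacle is this last step: verifying that at the critical radius $t^*$ the projection genuinely splits into two distinct nearest points, rather than a single nearest point that merely moves continuously. Ruling this out requires a compactness/continuity argument showing that the competing nearest points $\proj_\M(\vec{c}_t)$ as $t\downarrow t^*$ stay bounded away from $\x$ (equivalently, accumulate at a second point $\y'\neq\x$), and it is precisely here that mild regularity of $\M$ must be invoked — closedness, so that nearest points exist, and enough smoothness to guarantee the one-sided tubular neighborhood used above. Everything else reduces to the elementary inner-product identities of the first paragraph.
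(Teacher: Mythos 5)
The paper itself gives no proof of this statement: it is imported verbatim from \citet{federer:1959} (Thm.~4.18), and the only related machinery the paper uses downstream is Federer's Thm.~4.8(7), likewise imported without proof. So your attempt must be judged against Federer's original argument. Your key lemma and the inequality $\reach(\M)\geq$ the right-hand side of Eq.~\eqref{eq:fed_reach_calc} are correct and complete, modulo the closedness of $\M$ that you explicitly flag --- and which is genuinely needed, since for a non-closed manifold such as a line with one point deleted the right-hand side is $\infty$ while $\reach(\M)=0$. The identity $\norm{\x-\y}^2 = 2\inner{\vec{c}-\x}{\y-\x}$, the passage to the normal component, Cauchy--Schwarz, and the tube argument are all sound; this is a clean, elementary route through the ``easy'' direction.

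The genuine gap is in the reverse inequality, and it is larger than your closing paragraph concedes. First, the claim that $\x$ is the unique nearest point of $\vec{c}_t$ for small $t>0$ can fail for a $C^1$ manifold: for the graph of $|x|^{3/2}$ at the origin, no point $(0,t)$, $t>0$, projects to the origin, so your set is empty and $t^*$ undefined (this case must be split off; there $\reach(\M)=0$ and the inequality is vacuous). Second, and decisively, your proposed repair --- showing that the projections $\proj_\M(\vec{c}_t)$ stay bounded away from $\x$ as $t\downarrow t^*$ --- cannot work: for closed $\M$ the nearest-point map is continuous on $\Unp(\M)$, so in the problematic ``peeling'' scenario, where $\vec{c}_t\in\Unp(\M)$ but $\proj_\M(\vec{c}_t)\neq\x$ for $t>t^*$, one necessarily has $\proj_\M(\vec{c}_t)\to\x$, and compactness hands you back $\x$ itself rather than a second nearest point; no contradiction appears. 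Ruling out peeling below the reach is exactly the content of Federer's Thm.~4.8(6) (the projection is constant along a normal ray out to distance $\reach$), whose proof needs the segment property of projections (Thm.~4.8(3)), continuity of the projection on the tube, and a connectedness argument; your sliding construction needs this lemma twice, once to get $t^*>0$ and once to get the split at $t^*$. In other words, the hard half of the theorem has been reduced to, but does not contain, its essential ingredient --- precisely the ingredient the paper also declines to prove and cites from Federer as a lemma before Lemma~\ref{thm:local_reach_bouds}.
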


  \begin{figure}
      \centering
      \includegraphics[width=0.4\textwidth]{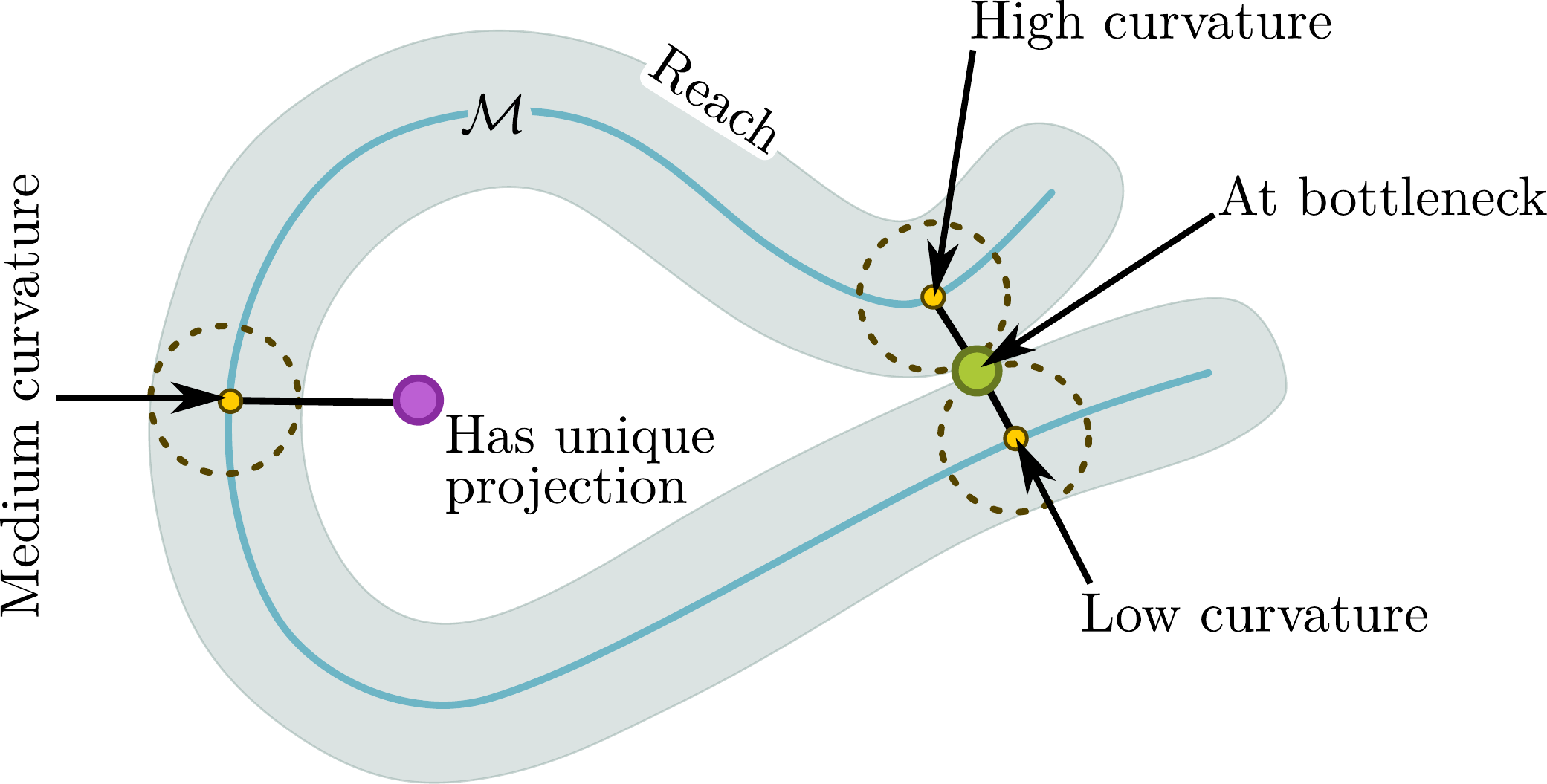}
      \caption{The global reach defines a region around the manifold $\M$ consisting of all points below a certain distance to $\M$. This captures both local manifold curvature as well as global shape.}
      \label{fig:bottleneck}
  \end{figure}
  
  For our objective of understanding which observations have a unique representation, i.e.\@ are inside $\Unp(\M)$, the global reach provides some information. Specifically, the set
  \begin{align}
    \M_r = \left\{ \x | \inf_{\y \in \M} \norm{\y - \x} < \reach(\M) \right\}
  \end{align}
  is a subset of $\Unp(\M)$. This implies that observations $\x$ that are inside $\M_r$ will have a unique projection, such that we can expect the representation to be unique. The downside is that since $\reach(\M)$ is a global quantity, $\M_r$ is an overly restrictive small subset of $\Unp(\M)$.
  Fig.~\ref{fig:bottleneck} illustrates this issue. Note how the global reach in the example is determined by the \emph{bottleneck}\footnote{Not to be confused with \emph{bottleneck network architectures} or the \emph{information bottleneck}.} of the manifold. Even if this bottleneck only influences the uniqueness of projections of a single point, it determines the global reach of the entire manifold. This implies that many points exist outside the reach which nonetheless has a unique projection.
  

  \subsection{Pointwise normal reach}
    \begin{figure}
        \centering
        \includegraphics[width=0.25\textwidth]{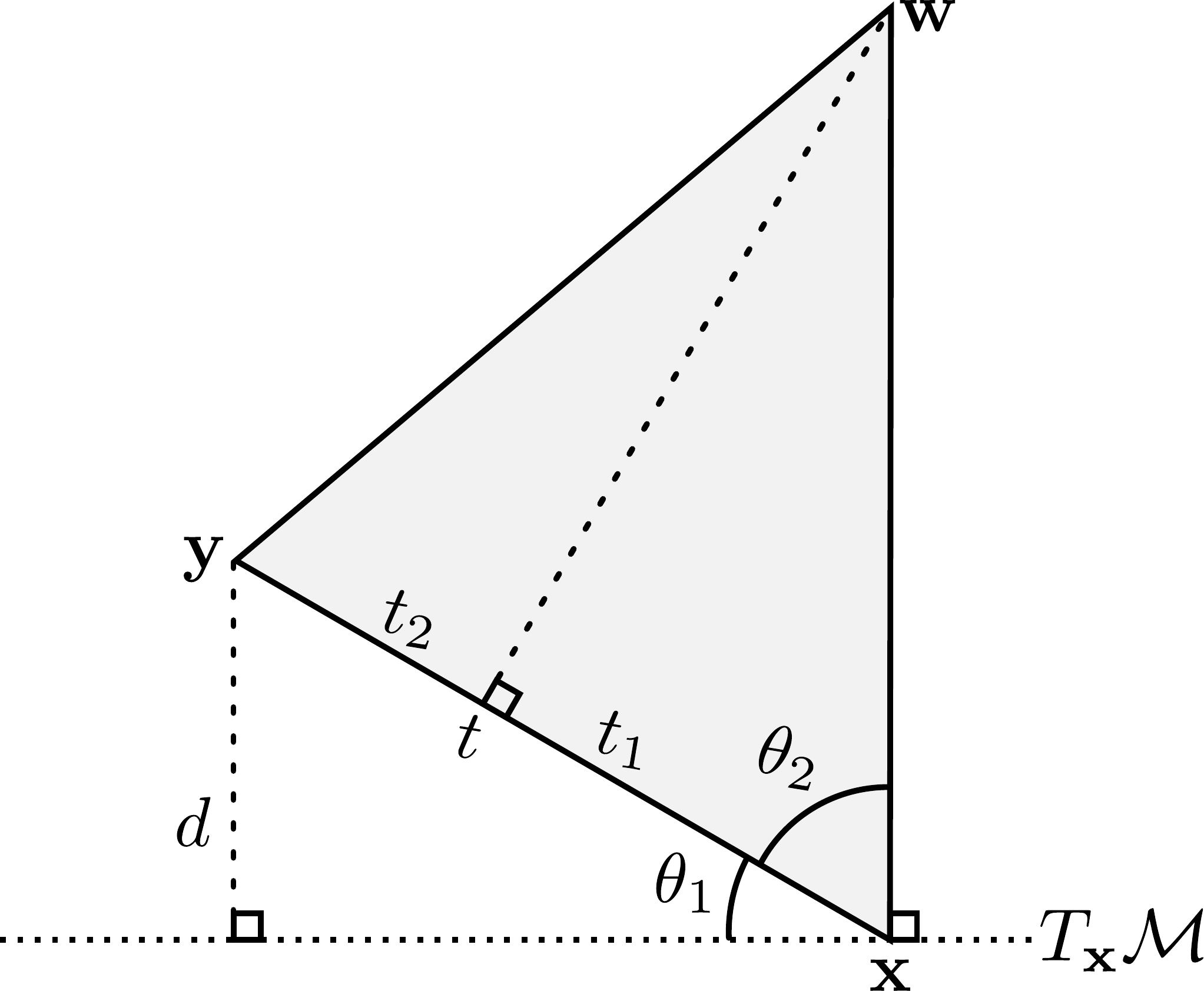}
        \caption{Notation for the proof of theorem~\ref{thm:r_n_in_normal_dir}.}
        \label{fig:proof}
    \end{figure}
    In order to get a more informative notion of reach, we develop a local version, which we, for reasons that will be clear, call the \emph{pointwise normal reach}. For ease of notation denote
    \begin{align}
      R(\x, \y) = \frac{\norm{\x-\y}^2}{2\norm{P_{N_\x\M}(\y-\x)}}
    \end{align}
    for $\x,\y$ with $\y-\x\notin T_\x\M$, else we let $R(\x,\y) = \infty$.
    We then define the pointwise normal reach as the local infimum of eq.~\ref{eq:fed_reach_calc}.
    \begin{definition}[Pointwise normal reach]\label{def:local_normal_reach}
      At a point $\x \in \M$, the pointwise normal reach is
      \begin{align}
        r_N(\x) = \inf_{\y \in M} R(\x,\y).
      \end{align}
    \end{definition}
    In theorem~\ref{thm:r_n_in_normal_dir} below we prove that the local estimate $r_N(\x)$ describes how far we can move along a normal vector at $\x$ and still stay within $\Unp(\M)$. This is useful as we know that $\x$ will lie in the normal space of $\M$ at $\proj_\M(\x)$ (\citet{federer:1959} Thm. 4.8).
    \begin{theorem}\label{thm:r_n_in_normal_dir}
      For all $x\in \M$
      \begin{align}
        B_{r_N(\x)}(\x) \cap N_{\x} \M \subset \Unp(\M),
      \end{align}
      where $N_{\x} \M$ denotes the normal space at $\x$.
    \end{theorem}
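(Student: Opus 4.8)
The plan is to show that every point $\vec{p}$ in $B_{r_N(\x)}(\x) \cap N_\x\M$ has $\x$ itself as its \emph{unique} nearest point on $\M$, which by definition of $\Unp(\M)$ places $\vec{p}$ in $\Unp(\M)$. I would write such a point as $\vec{p} = \x + \vec{v}$, where $\vec{v}$ is a normal vector (so $\vec{p}$ lies in the affine normal space through $\x$) with $\norm{\vec{v}} = \norm{\vec{p} - \x} < r_N(\x)$. The workhorse is the defining inequality of the pointwise normal reach: since $r_N(\x) = \inf_{\y\in\M} R(\x,\y) \le R(\x,\y)$, unfolding $R$ and clearing the denominator gives
\begin{align}
  \norm{\y - \x}^2 \ge 2\, r_N(\x)\, \norm{P_{N_\x\M}(\y - \x)}
\end{align}
for every $\y\in\M$ with $\y - \x \notin T_\x\M$; the tangential secants ($\y-\x\in T_\x\M$) will be handled separately since there the normal projection vanishes.

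Next I would estimate $\norm{\vec{p} - \y}^2$ for an arbitrary $\y \in \M$ and compare it with $\norm{\vec{p} - \x}^2 = \norm{\vec{v}}^2$. Expanding,
\begin{align}
  \norm{\vec{p} - \y}^2 = \norm{\vec{v} - (\y - \x)}^2 = \norm{\vec{v}}^2 - 2\inner{\vec{v}}{\y - \x} + \norm{\y - \x}^2 .
\end{align}
Because $\vec{v}$ is normal it annihilates the tangential part of $\y - \x$, so $\inner{\vec{v}}{\y - \x} = \inner{\vec{v}}{P_{N_\x\M}(\y - \x)}$, which Cauchy--Schwarz bounds by $\norm{\vec{v}}\,\norm{P_{N_\x\M}(\y - \x)}$. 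Substituting the reach inequality for the $\norm{\y - \x}^2$ term then yields
\begin{align}
  \norm{\vec{p} - \y}^2 \ge \norm{\vec{v}}^2 + 2\,\norm{P_{N_\x\M}(\y - \x)}\bigl(r_N(\x) - \norm{\vec{v}}\bigr).
\end{align}

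Since $\norm{\vec{v}} < r_N(\x)$, the bracketed factor is strictly positive and the whole correction term is nonnegative, so $\norm{\vec{p} - \y} \ge \norm{\vec{p} - \x}$ for every $\y$, i.e.\ $\x$ is a nearest point. For uniqueness I would argue strictness whenever $\y \ne \x$ via a case split: if $P_{N_\x\M}(\y - \x) \ne 0$ the correction term is strictly positive, forcing $\norm{\vec{p} - \y} > \norm{\vec{v}}$; and if $P_{N_\x\M}(\y - \x) = 0$ then $\y - \x \in T_\x\M$, the cross term vanishes, and the expansion collapses to $\norm{\vec{p} - \y}^2 = \norm{\vec{v}}^2 + \norm{\y - \x}^2 > \norm{\vec{v}}^2$. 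In either case $\y$ is strictly farther than $\x$, so $\x$ is the unique nearest point and $\vec{p} \in \Unp(\M)$.

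The displayed manipulations are routine; the step needing care is the uniqueness conclusion, where the \emph{open}-ball hypothesis $\norm{\vec{v}} < r_N(\x)$ must be combined with the case split on whether the secant $\y - \x$ has a nonzero normal component, so that no second minimizer can sit exactly on the boundary of the estimate. Finally, the degenerate case $r_N(\x) = \infty$ — where $R(\x,\y)=\infty$ for all $\y$, meaning every secant is tangential and $\M$ is flat through $\x$ — is covered by the same tangential estimate $\norm{\vec{p} - \y}^2 = \norm{\vec{v}}^2 + \norm{\y - \x}^2$, so the argument goes through uniformly.
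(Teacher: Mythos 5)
Your proof is correct, and it takes a genuinely different route from the paper's. The paper proceeds by contradiction: it assumes some $\w$ in the normal ball lies outside $\Unp(\M)$, extracts two equidistant nearest points, and then runs a planar trigonometric argument (an altitude through $\w$, angles $\theta_1,\theta_2$, sine/cosine relations) to contradict $r_N(\x)\le R(\x,\y)$. You instead argue directly that $\x$ itself is the unique nearest point of $\vec{p}=\x+\vec{v}$: the expansion $\norm{\vec{p}-\y}^2=\norm{\vec{v}}^2-2\inner{\vec{v}}{P_{N_\x\M}(\y-\x)}+\norm{\y-\x}^2$ (using that $\vec{v}$ kills the tangential part of the secant), Cauchy--Schwarz, and the defining bound $\norm{\y-\x}^2\ge 2r_N(\x)\norm{P_{N_\x\M}(\y-\x)}$ give $\norm{\vec{p}-\y}^2\ge\norm{\vec{v}}^2+2\norm{P_{N_\x\M}(\y-\x)}\bigl(r_N(\x)-\norm{\vec{v}}\bigr)$, and your case split on whether the secant has a nonzero normal component makes this strict for every $\y\ne\x$. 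This buys two things the paper's argument does not. First, a stronger conclusion: you identify the projection explicitly, $\proj_\M(\x+\vec{v})=\x$, which is the converse companion to the fact the paper quotes from Federer (that a point lies in the normal space of $\M$ at its projection), whereas the paper only excludes non-uniqueness. Second, robustness: since $\M=f(\R^d)$ need not be closed, a point can fail to belong to $\Unp(\M)$ because \emph{no} nearest point exists, a case the paper's proof silently skips when it posits two minimizers $\y_1,\y_2$ realizing $d(\w,\M)$; your estimate shows the infimum of distances is attained at $\x$ and only there, so both failure modes are handled at once. Your edge cases (tangential secants and $r_N(\x)=\infty$, where all secants are tangential) correctly reduce to the Pythagorean identity $\norm{\vec{p}-\y}^2=\norm{\vec{v}}^2+\norm{\y-\x}^2$. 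One cosmetic remark: the substituted reach bound also holds trivially for tangential secants (both sides vanish on the right), so the separate treatment is needed only to obtain strictness in the uniqueness step, not for the non-strict inequality.
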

    \begin{proof}
      Suppose for the sake of contradiction that there exists $\w\in \left(B_{r_N(\x)}(\x) \cap N_{\x}\M\right)\cap \Unp(\M)^c$. That is, there exists $\y_1,\y_2 \in \M$ such that
      \begin{align}
        d(\w, \M)  = \norm{\y_1-\w} = \norm{\y_2-\w},
      \end{align}
      where $d(\w, \M) = \inf_{\x\in \M} \norm{\x-\w}$.
      In particular, we know there exists $\y\in \M$ such that
      \begin{align}
        \norm{\y-\w}\leq \norm{\x-\w} < r_N(\x).
      \end{align}
      
      Now, let $\theta_1$ denote the (acute) angle between $T_\x\M$ and $\y-\x$, and let $\theta_2$ denote the angle between $\y-\x$ and $\w-\x$. The sum of $\theta_1$ and $\theta_2$ is a right angle, see Fig.~\ref{fig:proof}. Let $t$ be the distance from $\x$ to $\y$. The altitude through the vertex $\w$ divides $\y-\x$ into two line segments. Denote the length of the segment from the foot of the altitude to $\x$, $t_1$, and the length of the segment from the foot to $\y$, $t_2$. Note, $t_2$ will always be less or equal to $t_1$, as $\norm{\y-\w}\leq \norm{\x-\w}$.
      
      By the definition of cosine, $\cos \theta_2 = \frac{t_1}{\norm{\w-\x}} \geq \frac{t/2}{\norm{\w-\x}}$. At the same time $
        \cos\theta_2 = \cos(\pi/2-\theta_1) = \sin \theta_1 = \frac{d}{t},
        $
      where $d = \norm{P_{\w-\x}(\y-\x)}$, and as $\w-\x\in N_\x\M$, $d\leq \norm{P_{N_\x\M}(\y-\x)}$.
      Thus, we have $\frac{t/2}{\norm{\w-\x}} < \frac{d}{t}< \frac{\norm{P_{N_\x\M} (\y-\x)}}{t}$, implying $R(\x,\y)\leq \norm{\w-\x}$, which contradicts $r_N(\x) \leq R(\x,\y)$.
    \end{proof}

    In lemma~\ref{thm:local_reach_bouds} below we show that the pointwise normal reach bounds the reach. For this, we need theorem~4.8(7) from \citet{federer:1959} 
    \begin{lemma}[\citet{federer:1959}]
      Let $\x,\y$ be points on $\M$ with $r_{\text{max}}(\x) > 0$, and let $\vec{n}$ be a normal vector in $N_{\x}\M$, then
      \begin{align}
        \inner{\vec{n}}{\y-\x} \leq \frac{\norm{\y-\x}^2\norm{\vec{n}}}{2r_{\text{max}}(\x)}
      \end{align}
    \end{lemma}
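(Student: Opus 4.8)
The plan is to read the inequality geometrically: after normalizing $\vec{n}$, it says precisely that $\x$ is the nearest point of $\M$ to a point placed at distance $r$ from $\x$ along the normal direction, for every $r$ below $r_{\text{max}}(\x)$. So I would first reduce to the unit case by writing $\vec{u} = \vec{n}/\norm{\vec{n}}$. Since both sides of the claimed inequality are homogeneous of degree one in $\vec{n}$, it suffices to prove $\inner{\vec{u}}{\y-\x} \le \norm{\y-\x}^2/(2r_{\text{max}}(\x))$ and multiply through by $\norm{\vec{n}}$ at the end.

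The elementary algebraic core I expect to be routine. Fix $r$ with $0 < r < r_{\text{max}}(\x)$ and set $p_r = \x + r\vec{u}$. Since $p_r$ lies in $B_{r_{\text{max}}(\x)}(\x) \subset \Unp(\M)$, it has a unique nearest point on $\M$; the key geometric fact (below) is that this point is $\x$ itself, i.e.\ $\norm{p_r - \x} \le \norm{p_r - \y}$ for all $\y \in \M$. Expanding
\[
  \norm{p_r - \y}^2 = \norm{\x - \y}^2 + 2r\inner{\vec{u}}{\x - \y} + r^2
\]
and using $\norm{p_r - \x}^2 = r^2$, the inequality $\norm{p_r - \x}^2 \le \norm{p_r - \y}^2$ collapses to $\inner{\vec{u}}{\y - \x} \le \norm{\x - \y}^2/(2r)$. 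As $r \nearrow r_{\text{max}}(\x)$ the right-hand side decreases to $\norm{\x-\y}^2/(2r_{\text{max}}(\x))$, so taking the infimum over admissible $r$ yields the claim.

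The main obstacle is the geometric fact just assumed: that $\x$ is the nearest point to every $p_r$ on the normal ray for $r < r_{\text{max}}(\x)$. I would establish this from two standard properties of the projection $\proj_\M$ on $\Unp(\M)$: (i) continuity of $\proj_\M$, and (ii) a \emph{segment-monotonicity} property, namely that if $\proj_\M(p) = a$ then $\proj_\M$ equals $a$ at every point of the segment $[a,p]$ — this follows in one line from the triangle inequality, since a competitor strictly closer to an interior point of $[a,p]$ would be strictly closer to $p$. Setting $T = \sup\{\tau : \proj_\M(\x + t\vec{u}) = \x \text{ for all } t \in [0,\tau)\}$, the base case $T>0$ comes from the local graph structure of the smooth manifold $\M$ near $\x$ (a point slightly off $\M$ along the normal projects back to $\x$), property (ii) shows the set is an interval, and continuity gives $\proj_\M(\x + T\vec{u}) = \x$.

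The delicate step, which I expect to require the most care, is showing $T = r_{\text{max}}(\x)$: that the projection cannot jump to a distant point before the ball $B_{r_{\text{max}}(\x)}(\x)$ is exhausted. This is exactly the rolling-ball content of reach — the closed ball of radius $T$ centered at $\x + T\vec{u}$ meets $\M$ only at $\x$ and stays internally tangent as it inflates so long as $T < r_{\text{max}}(\x)$. Rather than redevelop this from scratch, I would close the argument by invoking the earlier parts of Federer's Theorem 4.8, on which the surrounding text already relies; the only genuinely new content needed here is the short algebraic reduction above.
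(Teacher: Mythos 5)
First, a point of comparison that matters here: the paper offers no proof of this statement at all --- it is imported verbatim as Theorem~4.8(7) of \citet{federer:1959}, and the citation \emph{is} the proof. So your proposal cannot agree or disagree with an in-paper argument; it can only be judged on its own. On those terms it is essentially correct, and it is in fact Federer's own route to 4.8(7): establish that every point $\x + r\vec{u}$ on the normal ray, $\vec{u} = \vec{n}/\norm{\vec{n}}$, $0 < r < r_{\text{max}}(\x)$, has $\x$ as a nearest point of $\M$, then expand $\norm{\x + r\vec{u} - \y}^2 \ge r^2$ to get $\inner{\vec{u}}{\y-\x} \le \norm{\x-\y}^2/(2r)$, and let $r \nearrow r_{\text{max}}(\x)$. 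Your homogeneity reduction, the expansion, the limiting step, and the one-line triangle-inequality proof of segment monotonicity (Federer's 4.8(5)) are all correct.

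The caveat is the one you flag yourself: the ``delicate step'' --- that the projection stays at $\x$ along the entire ray up to $r_{\text{max}}(\x)$, which is Federer's 4.8(6) --- is cited rather than proven, and it is worth recording why it cannot be dispatched by a routine open--closed continuation argument. If $\proj_\M(\x+s\vec{u}) = \vec{b} \ne \x$ for some $s < r_{\text{max}}(\x)$, then expanding $\norm{\x+s\vec{u}-\vec{b}}^2 < s^2$ gives $\inner{\vec{u}}{\vec{b}-\x} > \norm{\vec{b}-\x}^2/(2s)$: the failure of upward continuation at radius $s$ is precisely a violation of the inequality you are trying to prove. So ``the set of good radii is open on the right'' is essentially equivalent to the lemma itself, and your reduction relocates the difficulty rather than removes it; Federer's quantitative proof of 4.8(6) is where the real content lives. (Your base case $T>0$ is genuinely easier: for a smooth manifold the normal component of $\vec{b}-\x$ is of order $\norm{\vec{b}-\x}^2$, which contradicts the displayed violation as $s \to 0$.) Deferring 4.8(6) to Federer is perfectly legitimate here --- the paper defers the whole of 4.8(7) --- but a finished write-up should state explicitly that this is the external input, since with it in hand the remainder is three lines.
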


    \begin{lemma}\label{thm:local_reach_bouds}
      For all $\x \in \M$ we have that
      \begin{align}
        \inf_{\y\in B_{2r_N(\x)}(\x)\cap \M} r_N(\y) \leq r_{\text{max}}(\x) \leq r_N(\x).
      \end{align}
    \end{lemma}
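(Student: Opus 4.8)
The plan is to establish the two inequalities separately. The right-hand bound $r_{\text{max}}(\x) \leq r_N(\x)$ should follow almost directly from the preceding lemma (Federer's Thm.~4.8(7)), whereas the left-hand bound is the substantive direction and requires transporting the normal-reach guarantee of Theorem~\ref{thm:r_n_in_normal_dir} from $\x$ to the projection of a perturbed point.

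For the upper bound, I would apply the preceding lemma with the specific normal vector $\vec{n} = P_{N_\x\M}(\y-\x)$. Because the tangential and normal components of $\y-\x$ are orthogonal, the inner product collapses to $\inner{\vec{n}}{\y-\x} = \norm{P_{N_\x\M}(\y-\x)}^2$, while $\norm{\vec{n}} = \norm{P_{N_\x\M}(\y-\x)}$. Substituting into the lemma's inequality and cancelling one factor of $\norm{\vec{n}}$ (which is nonzero precisely when $\y-\x\notin T_\x\M$) rearranges to $r_{\text{max}}(\x) \leq R(\x,\y)$; the case $\y-\x\in T_\x\M$ is trivial since then $R(\x,\y)=\infty$, and $r_{\text{max}}(\x)=0$ is immediate. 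Taking the infimum over $\y\in\M$ gives $r_{\text{max}}(\x)\leq r_N(\x)$.

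For the lower bound, set $r^\ast = \inf_{\y\in B_{2r_N(\x)}(\x)\cap\M} r_N(\y)$ and observe that $\x$ itself lies in the index set, so $r^\ast \leq r_N(\x)$. It suffices to show $B_{r^\ast}(\x)\subset\Unp(\M)$, since then $r_{\text{max}}(\x)\geq r^\ast$ by definition. I would fix an arbitrary $\w\in B_{r^\ast}(\x)$ and let $\y_0\in\M$ be a nearest point of $\w$ (available when $\M$ is closed). Using $\x\in\M$ and the triangle inequality, $\norm{\w-\y_0}=d(\w,\M)\leq\norm{\w-\x}<r^\ast$, whence $\norm{\y_0-\x}<2r^\ast\leq 2r_N(\x)$; thus $\y_0\in B_{2r_N(\x)}(\x)\cap\M$ and so $r_N(\y_0)\geq r^\ast$. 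By Federer's Thm.~4.8 the displacement satisfies $\w-\y_0\in N_{\y_0}\M$, and $\norm{\w-\y_0}<r^\ast\leq r_N(\y_0)$, so $\w\in B_{r_N(\y_0)}(\y_0)\cap N_{\y_0}\M$. Theorem~\ref{thm:r_n_in_normal_dir} applied at $\y_0$ then yields $\w\in\Unp(\M)$, as desired.

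The hard part is the lower bound, and specifically the idea of invoking Theorem~\ref{thm:r_n_in_normal_dir} at the nearest point $\y_0$ rather than at $\x$: that theorem only certifies uniqueness along the normal directions at a single point, yet an arbitrary $\w$ in the full ball $B_{r^\ast}(\x)$ is normal not to $\M$ at $\x$ but to $\M$ at its own projection $\y_0$. Controlling how far $\y_0$ can wander from $\x$ is what produces the factor of $2$ and forces the infimum to range over the enlarged ball $B_{2r_N(\x)}(\x)$. I would finally dispose of the degenerate cases $r_{\text{max}}(\x)=0$ and $r_N(\x)=0$ separately, and I rely throughout on $\M$ being closed so that nearest points exist.
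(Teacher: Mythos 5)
Your proposal is correct and follows essentially the same route as the paper: the upper bound via Federer's Theorem~4.8(7) applied to the normal component of $\y-\x$, and the lower bound by projecting a point $\w$ near $\x$ to its nearest point $\y_0$, invoking Federer's Theorem~4.8 to get $\w-\y_0\in N_{\y_0}\M$, and then applying Theorem~\ref{thm:r_n_in_normal_dir} at $\y_0$, with the triangle inequality producing the factor of $2$. The only difference is presentational—you argue directly that $B_{r^\ast}(\x)\subset\Unp(\M)$ whereas the paper argues by contradiction—and your explicit handling of closedness and degenerate cases is, if anything, slightly more careful than the paper's.
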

    \begin{proof}\phantom{\qedhere} 
      Applying the result from Federer to the vector $\vec{n} = \frac{P_{N_\x\M}(\y-\x)}{\norm{P_{N_\x\M}(\y-\x)}}$ gives
      \begin{align}
        r_{\text{max}}(\x)\leq \frac{\norm{\x-\y}^2\norm{\vec{n}}}{2\norm{\vec{n}}\norm{\x-\y}\cos\theta},
      \end{align}
      where $\theta$ is the angle between $\x-\y$ and $\vec{n}$. Hence, $\cos\theta = \frac{\norm{P_{N_\x\M}(\y-\x)}}{\norm{\y-\x}}$. Thus, for all $\y\in \M$ 
      \begin{align}
        r_{\text{max}}(\x) \leq \frac{\norm{\x-\y}^2}{2\norm{P_{N_\x\M}(\y-\x)}},
      \end{align}
      proving the right inequality.
      Consider $B = B_{r_N(\x)}(\x)$. Suppose there exists $\w \in B$ with $\w \notin \Unp(\M)$. Then $\w \notin N_\x\M$. Hence there exists $\y_1,\y_2\in \M$ such that 
      $
        d(\w, \M) = \norm{\y_1 - \w} = \norm{\y_2 - \w} < r_N(\x).
      $ 
      From \citet{federer:1959} theorem 4.8 we know that $\w\in N_{\y_1}\M, N_{\y_2}\M$. Combining this with lemma \ref{thm:r_n_in_normal_dir} gives that $r_N(\y_1),r_N(\y_2) \leq d(\w,\M)$ and that $d(\x,\M)< \norm{\w-\x}$. We also have that $\norm{\y_1-\x},\norm{\y_2-\x}\leq 2 r_N(\x)$.
      Combining these inequalities gives us that the distance from $\x$ to any point not in $\Unp(\M)$ is greater than $\inf_{\y\in B_{2r_N(\x)}(\x) \cap \M} r_N(\y)$, which implies that
      \begin{equation}
        \inf_{\y \in B_{2r_N(\x)}(\x) \cap \M}r_N(\y) \leq r_{\text{max}}(\x).
        \tag*{\qed}
      \end{equation}
    \end{proof}


We presented the theoretical analysis under the assumption that $\M = f(\R^d)$ is a manifold. Although the theoretical results can be extended to arbitrary subsets of Euclidean space, the experimental setup requires the Jacobian to span the entire tangent space. This might not be the case if $\M$ has self-intersections. The theory can be extended to handle such self-intersections, but this significantly complicates the algorithmic development. See the appendix for a discussion.

  \subsection{Estimating the pointwise normal reach}
    The definition of $r_N$, prompts us to minimize $R(\x,\y)$ over all of $\M$, which is generally infeasible and approximations are in order. As a first step towards an estimator, assume that we are given a finite sample $\mat{S}$ of points on the manifold. We can then replace the infimum in definition~\ref{def:local_normal_reach} with a minimization over the samples. Using that the projection matrix onto $N_{\x}\M$ is given by $P_{N_\x\M} = \mat{I} - \J(\J\T \J)\inv \J\T$, we get the following estimator
    \begin{align}
      \hat r_N(\x) = \min_{\y\in \mat{S}} \frac{\norm{\y-\x}^2}{2\norm{(\mat{I}-\J(\J\T\J)\inv \J\T)(\y-\x)}},
      \label{eq:reach_est}
    \end{align}
    where $\J \in \mathbb{R}^{D \times d}$ is the Jacobian matrix of $f$ at $\x$. Note that since we replace the infimum with a minimization over a finite set, we have that $\hat r_N(\x) \geq r_N(\x)$.

    There are different choices of sampling sets $\mat{S}$. Given a trained autoencoder, a cheap way to obtain samples is to use the reconstructed training data as the sampling set. This will generally be sufficient if the training data is dense on the manifold, but this is rarely the case in high data dimensions.
    The following lemma provides us a way to restrict the area over which we must minimize.
    \begin{lemma}
      For any $\x, \y \in \M$
      \begin{align}
        R(\x,\y) \geq \frac{1}{2} \norm{\x-\y}.
      \end{align}
    \end{lemma}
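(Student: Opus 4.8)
The plan is to reduce the claimed inequality to the elementary fact that an orthogonal projection is non-expansive. First I would dispose of the degenerate case $\y - \x \in T_\x\M$ (which includes $\x = \y$, since the zero vector lies in every subspace): here $R(\x,\y) = \infty$ by definition, so the bound $R(\x,\y) \geq \tfrac{1}{2}\norm{\x-\y}$ holds trivially.

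For the remaining case $\y - \x \notin T_\x\M$ with $\x \neq \y$, I would unfold the definition of $R$ and observe that the desired inequality
\begin{align*}
  \frac{\norm{\x-\y}^2}{2\norm{P_{N_\x\M}(\y-\x)}} \geq \frac{1}{2}\norm{\x-\y}
\end{align*}
is, after multiplying through by $2\norm{P_{N_\x\M}(\y-\x)}$ and dividing by $\norm{\x-\y} > 0$, equivalent to
\begin{align*}
  \norm{P_{N_\x\M}(\y-\x)} \leq \norm{\y-\x}.
\end{align*}
Thus the entire content of the lemma is the statement that projecting the vector $\y-\x$ onto the normal space cannot increase its length.

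The key (and only) step is therefore to justify this non-expansiveness. Since $P_{N_\x\M}$ is an orthogonal projection, the Pythagorean decomposition $\norm{\y-\x}^2 = \norm{P_{N_\x\M}(\y-\x)}^2 + \norm{(\mat{I}-P_{N_\x\M})(\y-\x)}^2$ immediately gives $\norm{P_{N_\x\M}(\y-\x)} \leq \norm{\y-\x}$, with the nonnegativity of the tangential component supplying the slack. Substituting back recovers the bound, completing the argument. I do not anticipate any genuine obstacle here: the result is a direct consequence of orthogonality, and the main care is simply to keep track of the sign conventions in $R(\x,\y)$ and to handle the $\infty$ case separately so that the division by $\norm{P_{N_\x\M}(\y-\x)}$ is legitimate.
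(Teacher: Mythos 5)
Your proposal is correct and takes essentially the same route as the paper: both arguments reduce the inequality to the non-expansiveness of the orthogonal projection $P_{N_\x\M}$, via the decomposition $\R^D = T_\x\M \oplus N_\x\M$. Your version is slightly more explicit in handling the degenerate case $\y-\x\in T_\x\M$ (where $R(\x,\y)=\infty$) and in invoking the Pythagorean identity, but the underlying idea is identical.
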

    \begin{proof}
      Recall that $\y-\x = P_{N_\x\M}(\y-\x) + P_{T_\x\M}(\y-\x)$, as $\R^D = T_\x\M \oplus N_\x\M$. Hence $\norm{\y-\x} \geq \norm{P_{N_\x\M}(\y-\x)}$. The statement, thus, follows from the definition of $R$.
    \end{proof}

  
    \begin{algorithm}[H]
    \caption{Sampling-based reach estimator}\label{alg:sample}
    \begin{algorithmic}
      \STATE radius $\gets r_0$
      \STATE reach $\gets \infty$
      \FOR{$i \gets 1, \ldots, $ num\_batches}
        \STATE samples $\gets$ \texttt{sample\_ball}($\x, \text{radius}, \text{batch\_size}$)
        \STATE projected $\gets$ \texttt{decode}(\texttt{encode}(samples))
        \STATE reach $\gets \min\left(\text{reach}, \texttt{reach\_est}(\x, \text{projected})\right)$
        \STATE radius $\gets 2 \cdot$ reach
      \ENDFOR
    \end{algorithmic}
    \end{algorithm}
    The lemma points towards a simple computational procedure for numerically estimating the pointwise normal reach, which is explicated in algorithm~\ref{alg:sample}. Here $\texttt{reach\_est}$ refers to the application of eq.~\ref{eq:reach_est}.
    The algorithm samples uniformly inside a ball centered on $\x$ and repeatedly shrinks the radius of the ball as tighter estimates of the reach are recovered. We further use the autoencoding reconstruction as an approximation to the projection of $\x$ onto $\M$.

  \subsection{Is a point within reach?}
    Suppose that a point $\x \in \R^D$ is represented by a point on the manifold $f(\z)$. From definition~\ref{def:reach} we know that $\x$ has a unique nearest point on the manifold if
    \begin{align}\label{eq:r_max_boud}
      \norm{\x-f(\z)} < r_{\text{max}}(f(\z)).
    \end{align}
    A point $\x$ which does not satisfy this inequality risks not having a unique nearest point, and hence no unique representation. From lemma~\ref{thm:local_reach_bouds} we know that $r_{\text{max}}(f(\z)) \leq r_N(f(\z))$. So $\x$ risks not having a unique nearest point if
    \begin{align}
      \norm{\x-f(\z)} \geq r_N(f(\z)) \geq r_{\text{max}}(f(\z)).
    \end{align}
    We note that to show that $\norm{\x-f(\z)} \geq r_N(f(\z))$, it is enough to compute
    \begin{align}
      \hat r_N(f(\z)) = \inf_{\substack{\y\in \M\cap B_{2\norm{\x-f(\z)}}(f(\z)) \\ \y \neq f(\z)}} R(f(\z), \y),
    \end{align}
    i.e.\@ limit the search to a ball of radius $2\norm{\x-f(\z)}$. Thus, when we only need to determine if a point is inside the pointwise normal reach, we can pick $r_0 = 2\norm{\x-f(\z)}$ in Algorithm~\ref{alg:sample}.
    
    Notice that given any set of points on the manifold, the resulting estimation of $r_N$ will always be larger than the true value. It means that any point which lies outside the estimated normal reach, will in fact lie outside the true normal reach. However, a point which lies inside the estimated normal reach, risks lying outside the true normal reach, and thus not having a unique projection. 
    
  \subsection{Regularizing for reach}
    The autoencoder minimizes an $l_2$ error which is directly comparable to the pointwise normal reach. This suggests a regularizer that penalizes if the $l_2$ error is larger than the pointwise normal reach. In practice, we propose to use
    \begin{align}
      \mathcal{R}(\x) &= \texttt{Softplus}\left( \norm{f\left(g(\x)\right) - \x} - \hat r_N\left( f\left(g(\x)\right)\right) \right).
    \end{align}
    The reach-regularized decoder then minimizes
    \begin{align}
      \mathcal{L} &= \sum_{n=1}^N \| f(g(\x_n)) - \x_n \|^2 + \lambda \sum_{n=1}^N \mathcal{R}(\x_n),
    \end{align}
    while we do not regularize the encoder.
    We also experimented with a \texttt{ReLU} activation instead of \texttt{Softplus}, but found the latter to yield more stable training. When estimating the pointwise normal reach, $\hat r_N$, we apply Algorithm~\ref{alg:sample} with an initial radius of $r_0 = 2 \| f(g(\x_n)) - \x_n \|$.

\section{Experiments}
Having established a theory and algorithm for determining when a representation can be expected to be unique, we next investigate its use empirically. We first compute  the pointwise local reach across a selection of models to see if it provides useful information. We then carry on to investigate the use of reach regularization. \footnote{The code is available at \url{https://github.com/HeleneHauschultz/is_an_encoder_within_reach}.}

\begin{figure} 
    \includegraphics[width=\columnwidth]{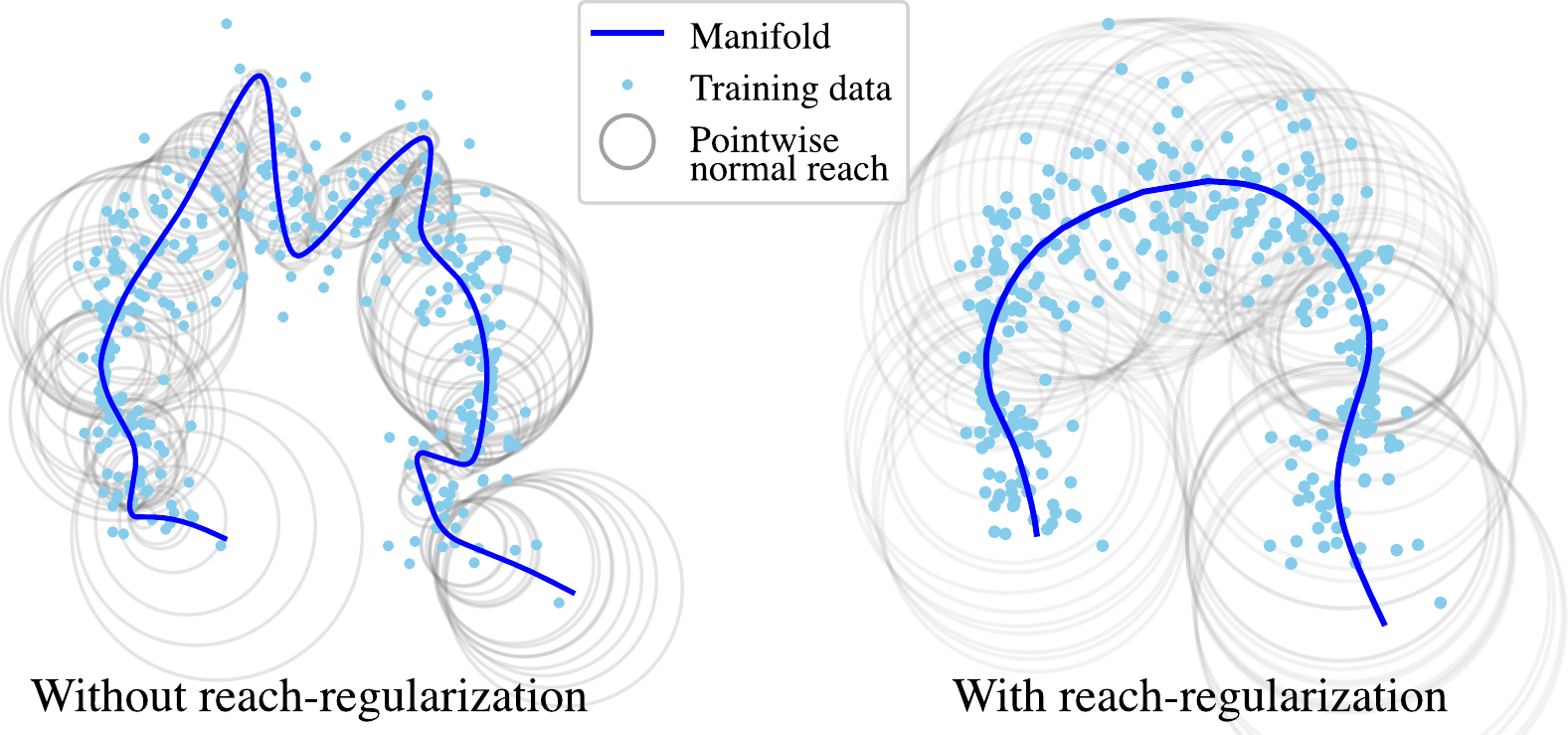}
    \caption{\emph{Left:} An autoencoder trained on noisy points scattered along a circular arc. \emph{Right:} The manifold spanned by the decoder of an autoencoder trained with reach regularization. In both panels, the gray circles illustrates the estimated pointwise normal reach at points along the autoencoder curve.}
    \label{fig:no_reg_reach}
\end{figure}

\subsection{Analysing reach}

\subsubsection{Toy circle}\label{subsec:circle}
We start our investigations with a simple toy example to get an intuitive understanding. We generate observations along a circular arc with added Gaussian noise of varying magnitude. Specifically, we generate approximately $400$ points as $z \mapsto t\left(\sin(z), -\cos(z)\right) + 1.5\cos(z)\epsilon$, where $\epsilon \sim \mathcal{N}(0,1)$. On this, we train an autoencoder with a one-dimensional latent space. The encoder and decoder both consist of linear layers, with three hidden layers with $128$ nodes and with ELU non-linearities.

Figure~\ref{fig:no_reg_reach}(left) shows the data alongside the estimated manifold and its pointwise normal reach.
We observe that the manifold spanned by the decoder has areas with small reach, where the manifold curves to fit the noisy data. The pointwise normal reach seems to well-reflect the curvature of the estimated manifold. The plot illustrates how some of the points end up further away from the manifold than the reach. For some of the points, this is not a problem, as they still have a unique projection onto the manifold. However, some of the points are equally close to different points on the manifold, such that their representation cannot be trusted.


\begin{figure*}[t]
  \centering
  \includegraphics[width=\textwidth]{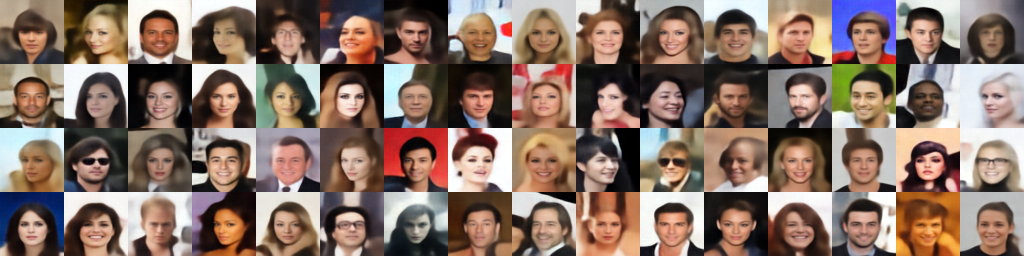}
  \caption{CelebA validation set reconstructions.}
  \label{fig:celeba-val-recon}
\end{figure*}

\subsubsection{CelebA}
To investigate the reach on a non-toy dataset, we train a deep autoencoder on the CelebA face dataset \citep{celeba}. The dataset consists of approximately $200\,000$ images of celebrity faces. 

We train a symmetric encoder-decoder pair that maps the $64 \times 64 \times 3$ images to a $128$ dimensional latent space, and back. The encoder consists of a single 2d convolution operation without stride followed by six convolution operations with stride 2, resulting in a $1 \times 1 \times C$ image. We use $C=128$ channels for all convolutional operations, a filter size of 5 and Exponential Linear Unit (ELU) non-linearities. The decoder is symmetric, using transpose convolutions with stride 2 to upsample and ending with a convolution operation mapping to $64 \times 64 \times 3$. The model is trained for 1M gradient updates on the mean square error loss, with a batch size of 128, using the Adam optimizer with a learning rate of $10^{-4}$. 
Example reconstructions on the validation set are provided in Fig.~\ref{fig:celeba-val-recon}.

After training we estimate the reach of the validation set using the sampling based approach (Alg.~\ref{alg:sample}). Fig.~\ref{fig:plots}(left) plots the reconstruction error $\norm{\x - f(\z)}$ versus the pointwise normal reach. We observe that almost all observations lie outside the pointwise normal reach, implying that we cannot guarantee a unique representation. This is a warning sign that our representations need not be trustworthy.

\begin{figure*}
  \includegraphics[width=\textwidth]{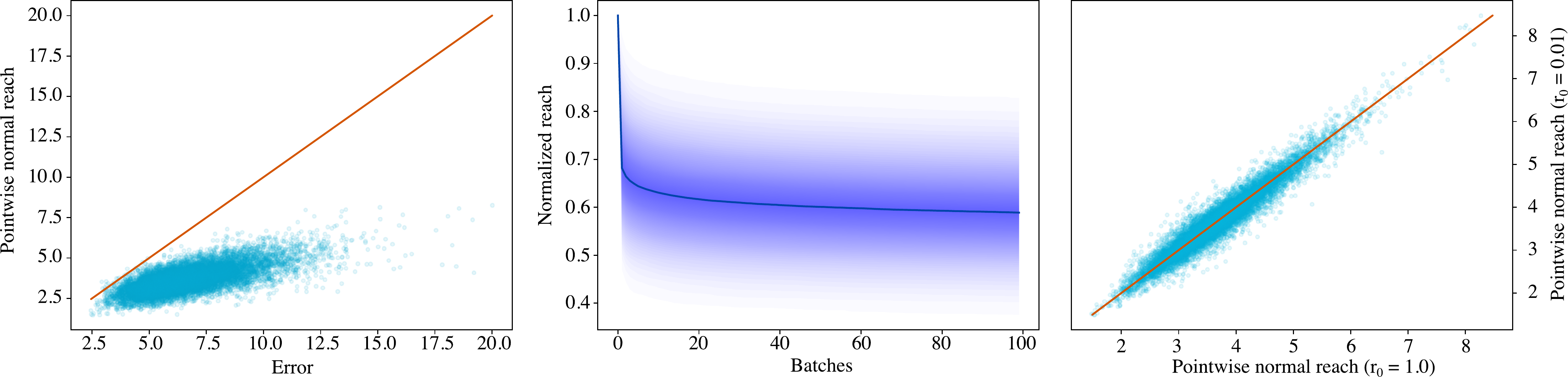}
  \caption{
    \emph{Left:} Estimated reach for CelebA validation samples plotted against the L2 error. Samples below the diagonal red line does not have a unique encoder.
    \emph{Center:} Normalized reach as a function of batches used to estimate the reach. 
    The normalized reach is the estimated pointwise normal reach divided by the estimated pointwise normal reach after the first batch.
    The hyperball sampling reach estimator quickly converges.
    \emph{Right:} Sensitivity analysis of the hyperball sampling reach estimator to the initial hyperball radius. The reach of CelebA validation samples are estimated with initial radii $r_0=1.0$ and $r_0=0.01$ respectively and their final reach after 100 batches are plotted against each other.
  }
  \label{fig:plots}
\end{figure*}

Next we analyze the empirical convergence properties of our estimator on the CelebA autoencoder. Fig.~\ref{fig:plots}(center) shows the average pointwise normal reach over the validation set as a function of the number of iterations in the sampling based estimator. We observe that the estimator converges after just a few iterations, suggesting that the estimator is practical.

The estimator relies on an initial radius for its search. Fig.~\ref{fig:plots}(right) shows the estimated pointwise normal reach on the validation set, plotted for two different initial radii. We observe that the estimator converges to approximately the same value in both cases, suggesting that the method is not sensitive to this initial radius. However, initializing with a tight radius will allow for faster convergence.

\subsection{Reach regularization}
Having established that the pointwise normal reach provides a meaningful measure of uniqueness, we carry on to regularize accordingly.

\subsubsection{Toy circle}
Returning to the example from section~\ref{subsec:circle}, we train an autoencoder of the same architecture with the reach regularization. We pretrain the network $100$ epochs without regularization, and then $2000$ iterations with reach regularization. \looseness=-1

Fig.~\ref{fig:no_reg_reach} (right) shows that reach regularization gives a significantly smoother manifold than without regularization (left panel). The gray circles on the plot indicate that almost all the points are now within the pointwise normal reach, and arguably the associated representations are now more trustworthy.\looseness=-1

\subsubsection{MNIST}\label{section:reach_reg:Mnist}
Next we train an autoencoder on 5000 randomly chosen images from the classes 2, 4 and 8 from MNIST \citep{lecun1998gradient}. We use a symmetric architecture reducing to two dimensional representation through a sequence of $784\rightarrow500\rightarrow250\rightarrow150\rightarrow100\rightarrow50\rightarrow10$ linear layers with ELU non-linearities. We pretrain 5000 epochs without any regularization, and proceed with reach regularization enabled. Figure~\ref{fig:mnist_reg} (left) shows the percentage of points which lies within reach of the estimated manifold. We observe that reach regularization slightly increases the reconstruction error (see example reconstructions in fig.~\ref{fig:mnist_recon}), as any regularization would, while significantly increasing the percentage of points that are known to have a unique representation. This suggests that reach regularization only minimally changes reconstructions while giving a significantly more smooth model, which is more reliable.

Figure~\ref{fig:mnist_reg} (center) shows the latent representations given by the pretrained autoencoder without regularization, while fig.~\ref{fig:mnist_reg} (right) shows the latent representations after an additional 200 epochs with reach regularization. The latent representations with corresponding data points outside reach, that is, where the reconstruction error is greater than the pointwise normal reach at the reconstructed point is plotted in red. The points inside reach are plotted in green. We observe that after regularization, significantly more points can be expected to be unique and thereby trustworthy. Note that the latent configuration is only changed slightly after reach regularization, which suggests that the expressive power of the model is largely unaffected by the reach regularization.


\begin{figure*}
    \includegraphics[width=\textwidth]{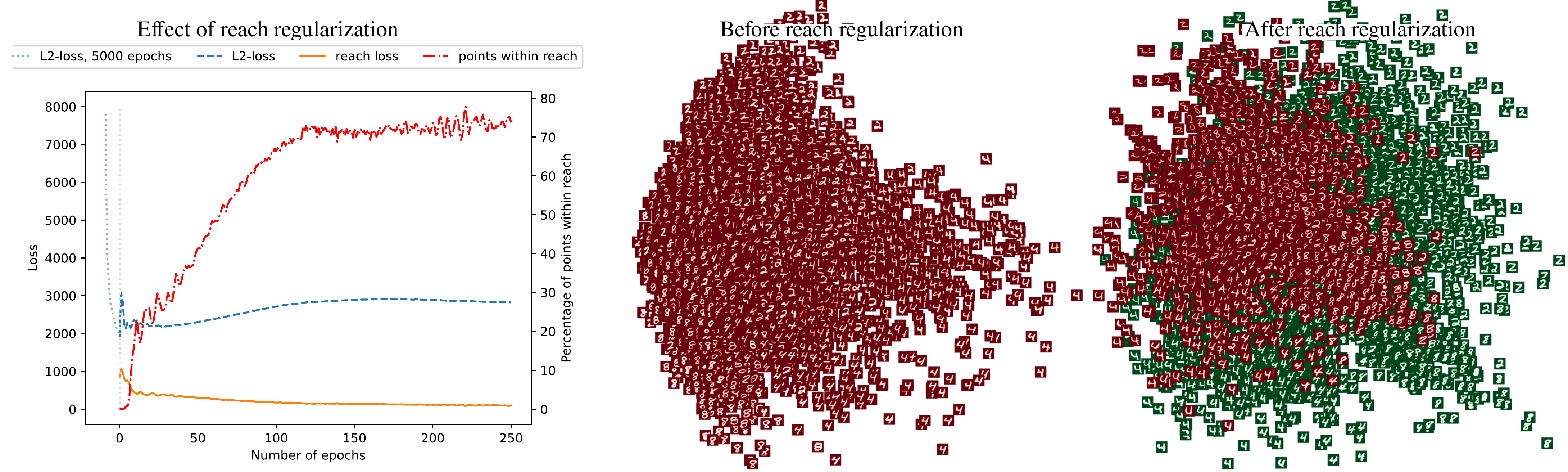}
    \caption{The effect of reach regularization on an MNIST model.
      \emph{Left:} The plot shows that the percentage of points within reach increases, while the $l_2$-loss is nearly unchanged. 
      We plot the loss curve from the initial 5000 epochs without regularization, to show how the $l_2$-loss behaves when regularizing.
      \emph{Center \& right:} Latent representations of the MNIST autoencoder before and after the reach regularization (visualized using PCA). The red numbers are outside the reach, while the green are within. Reach regularization smoothens the decoder to increase reach with minimal changes to both reconstructions and latent configuration.
    }
    \label{fig:mnist_reg}
\end{figure*}

\begin{figure}
  \includegraphics[width=\linewidth]{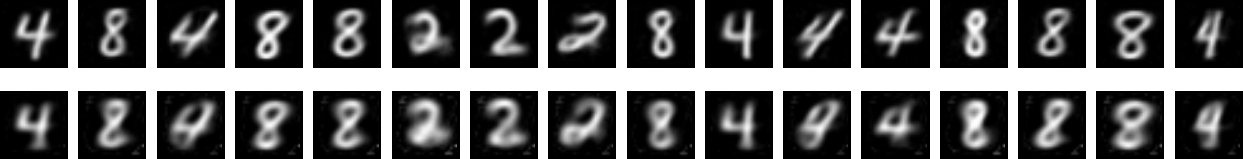}
  \caption{Reconstructions of MNIST images before (top) and after (bottom) reach regularization. The reach regularization only minimally reduces reconstruction quality, while significantly improving upon representation uniqueness.}
  \label{fig:mnist_recon}
\end{figure}

\section{Related work}
  Representation learning is a foundational aspect of current machine learning, and the discussion paper by \citet{bengio2013representation} is an excellent starting point. As is common, \citet{bengio2013representation} defines a representation as the output of a function applied to an observation, implying that a representation is unique. In the specific context of autoencoders, we question this implicit assumption of uniqueness as many equally good representations may exist for a given observation. While only studied here for autoencoders, the issue applies more generally when representations span submanifolds of the observation space.
  
  In principle, probabilistic models may place multimodal distributions over the representation of an observation in order to reflect lack of uniqueness. In practice, this rarely happens. For example, the highly influential \emph{variational autoencoder} \citep{kingma:iclr:2014, rezende:icml:2014} amortizes the estimation of $p(\z|\x)$ such that it is parametrized by the output of a function. Alternatives relying on Monte Carlo estimates of $p(\z|\x)$ do allow for capturing non-uniqueness \citep{pmlr-v70-hoffman17a}, but this is rarely done in practical implementations. That Monte Carlo estimates provide state-of-the-art performance is perhaps indicative that coping with non-unique representations is important. Our approach, instead, aim to determine which observations can be expected to have a unique representation, which is arguably simpler than actually finding the multiple representations.
  
  Our approach relies on the reach of the manifold spanned by the decoder. This quantity is traditionally studied in geometric measure theory as the reach is informative of many properties of a given manifold. For example, manifolds which satisfy that $\reach(\M)> 0$ are $C^{1,1}$, i.e.\@ the transition functions are differentiable with Lipschitz continuous derivatives. In machine learning, the reach is, however, a rarely used concept. \citet{Fefferman2016} investigates if a manifold of a given reach can be fitted to observed data, and develops the associated statistical test. Further notable exceptions are the multichart autoencoder by \citet{schonsheck2020chart}, and the adaptive clustering of \citet{besold2020adaptive}. Both works rely on the reach as a tool of derivation. Similarly, \citet{chae2021likelihood} relies on the assumption of positive reach when deriving properties of deep generative models. These works all rely on the global reach, while we have introduced a local generalization. 

  The work closest to ours appears to be that of \citet{aamari2019estimating} which studies the convergence of an estimator of the global reach \eqref{eq:fed_reach_calc}. This only provides limited insights into the uniqueness of a representation as the global reach only carries limited information about the local properties of the studied manifold. We therefore introduced the pointwise normal reach alongside an estimator thereof. This gives more precise information about which observations can be expected to have a unique representation.

\section{Discussion}
  The overarching question driving this paper is \emph{when can representations be expected to be unique?} 
  Though commonly assumed, there is little mathematical reason to believe that the choice of optimal representation is generally unique. The theoretical implication of this is that enforcing uniqueness on non-unique representations leads to untrustworthy representations. 
  
  We provide a partial answer for the question in the context of autoencoders, through the introduction of the \emph{pointwise normal reach}. This provides an upper bound for a radius centered around each point on the manifold spanned by the decoder, such that any observation within the ball has a unique representation. This bound can be directly compared to the reconstruction error of the autoencoding to determine if a given observation might not have a unique representation. This is a step towards a systematic quantification of the reliability and trustworthiness of learned representations.
  
  Empirically, we generally find that most trained models do not ensure that representations are unique. For example, on CelebA we found that almost no observations were within reach, suggesting that uniqueness was far from ensured. This is indicative that the problem of uniqueness is not purely an academic question, but one of practical importance.
  
  We provide a sampling estimator of the pointwise normal reach, which is guaranteed to upper bound the true pointwise normal reach. The estimator is easy to implement, with the main difficulty being the need to access the Jacobian of the decoder. This is readily accessible using forward-mode automatic differentiation, but it can be memory-demanding for large models.
  
  It is easy to see that the sample-based pointwise normal reach estimator converges to the correct value in the limit of infinitely many samples. We, however, have no results on the rate of convergence. In practice we observe that the estimator converges in a few iterations for most models, suggesting the convergence is relatively fast.  In practice, the estimator, however, remains computationally expensive.

  While we can estimate the pointwise normal reach quite reliably even for large models within manageable time, the estimator is currently too expensive to use for regularization of large models. On small models, we observe significant improvements in the uniqueness properties of the representations at minimal cost in terms of reconstruction error. This is a promising result and indicative that it may be well worth using this form of regularization. While more work is needed to speed up the estimating of pointwise normal reach, our work does pave a path to follow.

\subsubsection*{Acknowledgements}
This work was supported by research grants (15334, 42062) from VILLUM FONDEN. This project has also received funding from the European Research Council (ERC) under the European Union's Horizon 2020 research and innovation programme (grant agreement 757360). This work was funded in part by the Novo Nordisk Foundation through the Center for Basic Machine Learning Research in Life Science (NNF20OC0062606).  Helene Hauschultz is partly financed by Aarhus University Centre for Digitalisation, Big Data and Data Analytics (DIGIT).


\bibliography{paper}

\newpage
\appendix

\section{Appendix}

\subsection{Extending the pointwise normal reach to the non-manifold setting}
\citet{federer:1959} introduces reach for arbitrary subsets of Euclidean space.  In this situation $T_\x\M$ and $N_\x\M$ denote the tangent- and normal cone. 

\begin{definition}
Let $\M \subset \R^D$ denote an arbitrary subset and let $x\in \M$. Then $v\in \R^D$ is a tangent vector for $\M$ at $\x$ if either $v= 0$ or if for every $\varepsilon> 0 $ exists $\y\in \M$ with
\begin{equation}
    0 <\norm{\y-\x} < \varepsilon \quad \text{and}\quad \norm{\frac{\y-\x}{\norm{\y-\x}} - \frac{v}{\norm{v}}} < \varepsilon.
\end{equation}
Let $T_\x\M$ denote the set of tangent vectors for $\M$ at $\x$. A vector $w\in \R^D$ is a normal vector for $\M$ at $\x$ if 
\begin{equation}
    \inner{w}{v} \leq 0 \text{ for all } v\in T_\x\M.
\end{equation}
Let $N_\x\M$ denote the set of all normal vectors for $\M$ at $\x$.
\end{definition}

We can extend theorem~\ref{thm:r_n_in_normal_dir} and Lemma~\ref{thm:local_reach_bouds} to the general situation as defined by Federer. To extend Theorem~\ref{thm:r_n_in_normal_dir} it is sufficient to prove that for any $v\in N_\x\M$ and $u\in \R^D$, $\norm{P_v(u)}\leq d(u,T_\x\M)$.

\begin{lemma}
For any $v\in N_\x\M$ and $u\in \R^D$ with $\inner{v}{u} \geq 0$, $\norm{P_v(u)}\leq d(u,T_\x\M)$.
\end{lemma}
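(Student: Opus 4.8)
The plan is to reduce the claim to a single application of Cauchy--Schwarz together with the defining inequality of the normal cone. Since the hypothesis $\inner{v}{u} \geq 0$ lets me discard the absolute value, I would first rewrite the left-hand side as $\norm{P_v(u)} = \inner{v}{u}/\norm{v}$, where $P_v$ denotes orthogonal projection onto the line spanned by $v$. The degenerate case $v = 0$ is immediate, since then $P_v(u) = 0$ while $d(u, T_\x\M) \geq 0$, so I would assume $v \neq 0$ from here on.

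Next I would unfold the distance $d(u, T_\x\M) = \inf_{t \in T_\x\M} \norm{u - t}$ and bound $\inner{v}{u}/\norm{v}$ by $\norm{u - t}$ for an arbitrary tangent vector $t \in T_\x\M$. The key algebraic step is the splitting
\begin{align}
  \inner{v}{u} = \inner{v}{u - t} + \inner{v}{t}.
\end{align}
Because $v \in N_\x\M$ and $t \in T_\x\M$, the normal-cone condition $\inner{v}{t} \leq 0$ shows that the second term can only decrease the sum, so $\inner{v}{u} \leq \inner{v}{u - t}$. Applying Cauchy--Schwarz to the first term yields $\inner{v}{u - t} \leq \norm{v}\,\norm{u - t}$, and dividing through by $\norm{v}$ gives $\norm{P_v(u)} \leq \norm{u - t}$.

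Finally, since $t \in T_\x\M$ was arbitrary, I would take the infimum over the tangent cone to obtain $\norm{P_v(u)} \leq \inf_{t \in T_\x\M} \norm{u - t} = d(u, T_\x\M)$, completing the argument. There is no substantive obstacle here; the only points requiring care are treating $v = 0$ separately and observing that the sign hypothesis $\inner{v}{u} \geq 0$ is exactly what removes the absolute value so the estimate keeps the correct sign. This hypothesis is precisely the one met in the intended application, where $\w - \x$ plays the role of $v$ and $\y - \x$ the role of $u$, and the angle $\theta_2$ in Figure~\ref{fig:proof} is acute. The resulting inequality is what lets $d(\y-\x, T_\x\M)$ replace $\norm{P_{N_\x\M}(\y-\x)}$ in the cone setting, thereby extending Theorem~\ref{thm:r_n_in_normal_dir}.
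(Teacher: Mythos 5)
Your proof is correct, but it takes a more elementary route than the paper's. You fix an arbitrary tangent vector $t \in T_\x\M$, split $\inner{v}{u} = \inner{v}{u-t} + \inner{v}{t}$, discard the second term via the normal-cone inequality $\inner{v}{t}\leq 0$, apply Cauchy--Schwarz to the first, and take the infimum over $t$. The paper instead works through the dual cone: it first establishes the exact identity $d(u,\dual(v)) = \norm{P_v(u)}$, where $\dual(v) = \{w : \inner{v}{w}\leq 0\}$ is the half-space dual to $v$ (by decomposing $u = u_v + u_{v^\bot}$ and minimizing explicitly over $w \in \dual(v)$), and then concludes from the containments $T_\x\M \subset \dual(N_\x\M) \subset \dual(v)$ that $d(u,T_\x\M) \geq d(u,\dual(v))$. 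The two arguments use the same two ingredients (the sign hypothesis $\inner{v}{u}\geq 0$ and the defining inequality of the normal cone), but your version inlines the half-space distance computation into a one-line estimate, which is shorter and sidesteps the bookkeeping in the paper's displayed expansion (which, as printed, omits squares on the norms and has the infimum-attainment step $w = u_{v^\bot} \in \dual(v)$ left implicit). What the paper's route buys in exchange is an exact equality rather than an inequality: $\norm{P_v(u)}$ is precisely the distance to the dual half-space, which makes transparent that the slack in the lemma is exactly the gap between $T_\x\M$ and $\dual(v)$. Your handling of the degenerate case $v=0$ and your observation that the hypothesis $\inner{v}{u}\geq 0$ is exactly what removes the absolute value (and is supplied in the application by convexity of the tangent cone) are both accurate.
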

\begin{proof}
For a subset $A\subset \R^D$, $\dual(A) = \{v\in \R^D: \inner{a}{v} \leq 0 \text{ for all }a\in A\}$. First we prove that $d(u,\dual(v)) = \norm{P_v(u)}$. Note that we can write $u = u_v + u_{v^\bot}$, where $u_v = P_v(u)$ and $u_{v^\bot} \in v^\bot$. Then
\begin{equation}
    \begin{split}
        d(u,\dual(v)) =& \inf_{w\in\dual(v)} \norm{u-w} = \inf_{w\in \dual(v)}\norm{u_v + u_{v^\bot} - w}\\ =& \inf_{w\in \dual(v)} \norm{u_v} + \norm{u_{v^\bot} -w} -2\inner{u_v}{ w}.
    \end{split}
\end{equation}
As $\inner{u_v}{w} \leq 0$, it follows that the infimum is achieved when $w = u_{v^\bot}$. By the definition of the dual it follows that $\dual(v) \supset \dual(N_\x\M) \supset T_\x\M$. Hence 
$d(u, T_\x\M) = \inf_{w\in T_\x\M} \norm{u-w} \geq \inf_{w\in \dual(v)}\norm{u-w} = \norm{P_u(v)}$.
\end{proof}

To extend lemma~\ref{thm:local_reach_bouds} note that if $r_{max}(x) > 0$, then $T_\x\M$ is convex \citep[Thm 4.8 (12)]{federer:1959}. Let $\y\in \M$. If $\y-\x\in T_\x\M$ then $R(\x,\y) = \infty$. Otherwise, as $T_\x\M$ is a convex cone, there exists $n\in N_\x\M$ such that $\inner{n}{\y-\x} \geq 0$. In that case $\inner{n}{\x-\y} = \norm{P_n(\y-\x)}$, so applying Lemma 2.5 gives the result.

Though the theory can be extended to general subspaces, the manifold assumption is important for the experimental setup.
An important assumption for the estimator (\ref{eq:reach_est}) is that the Jacobian spans the entire tangent space. 
If this is not the case, this estimator does not estimate the pointwise normal reach. The reason being, the length of the projection onto the orthogonal complement of the Jacobian is not necessarily the distance to the tangent space. 
It is clear that when we want to study the uniqueness of latent representations, if the decoder is not injective, it automatically has areas without unique representations. So if the decoder is not injective, we should already be wary about trusting the latent representations. 

\subsection{Reach estimation in increasing ambient dimension}
In the following experiment we want to see the behavior of the reach estimator when the dimension in which the manifold is embedded increases. 
We consider the graph $(\x,\y)\mapsto U_n(\x,\y,\x^2 + \y^2, 0,\dots , 0)$, where $U_n\in O(n)$ is an orthogonal matrix. That is, we embed the quadratic surface $(\x,\y,\x^2 + \y^2)$ isometrically into $\R^n$. We then estimate the pointwise normal reach in $\vec{0}$ with one iteration of Algorithm \ref{alg:sample} with an initial radius of $5$ and a sample size of $10$. We estimate the pointwise normal reach 100 times in each dimension and take the average of these. The true value of the pointwise normal reach is $r_N(\vec{0}) = 0.5$.
\begin{figure}\label{fig:quadratic_reach}
    \centering
    \includegraphics[width =0.99\linewidth]{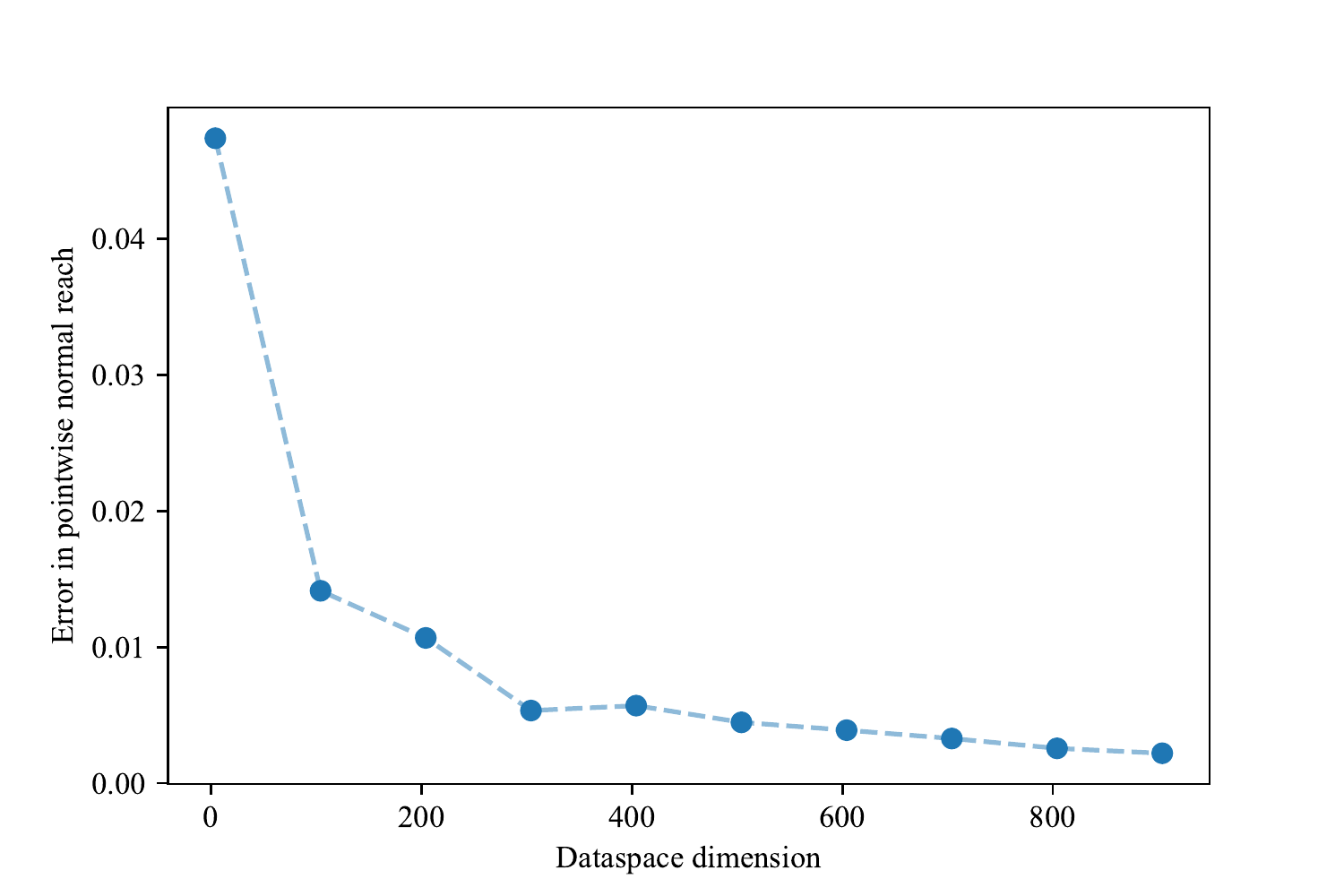}
    \caption{The plot shows how the average overestimation of the pointwise normal reach of an 2-dimensional quadratic surfaces isometrically embedded into a higher dimensional space goes down as the ambient dimension goes up.}
\end{figure}
Figure \ref{fig:quadratic_reach} shows how the average overestimation of the pointwise normal reach goes down as the ambient dimension goes up.

\subsection{Reconstruction error in test set during reach regularization}
We extend the experiment from Section~\ref{section:reach_reg:Mnist} where we perform the reach regularization on an autoencoder trained on a subset of the MNIST data. At each iteration we compute the reconstruction error of a test set. We see that the test error is similar to the training error, suggesting that the model generalizes well to the data. This implies that a model having data points outside reach does not determine that the model does not generalize well to the data. Furthermore, reach regularization does necessarily impact the generalization of the model. 
\begin{figure}
    \centering
    \includegraphics[width = 0.99\linewidth]{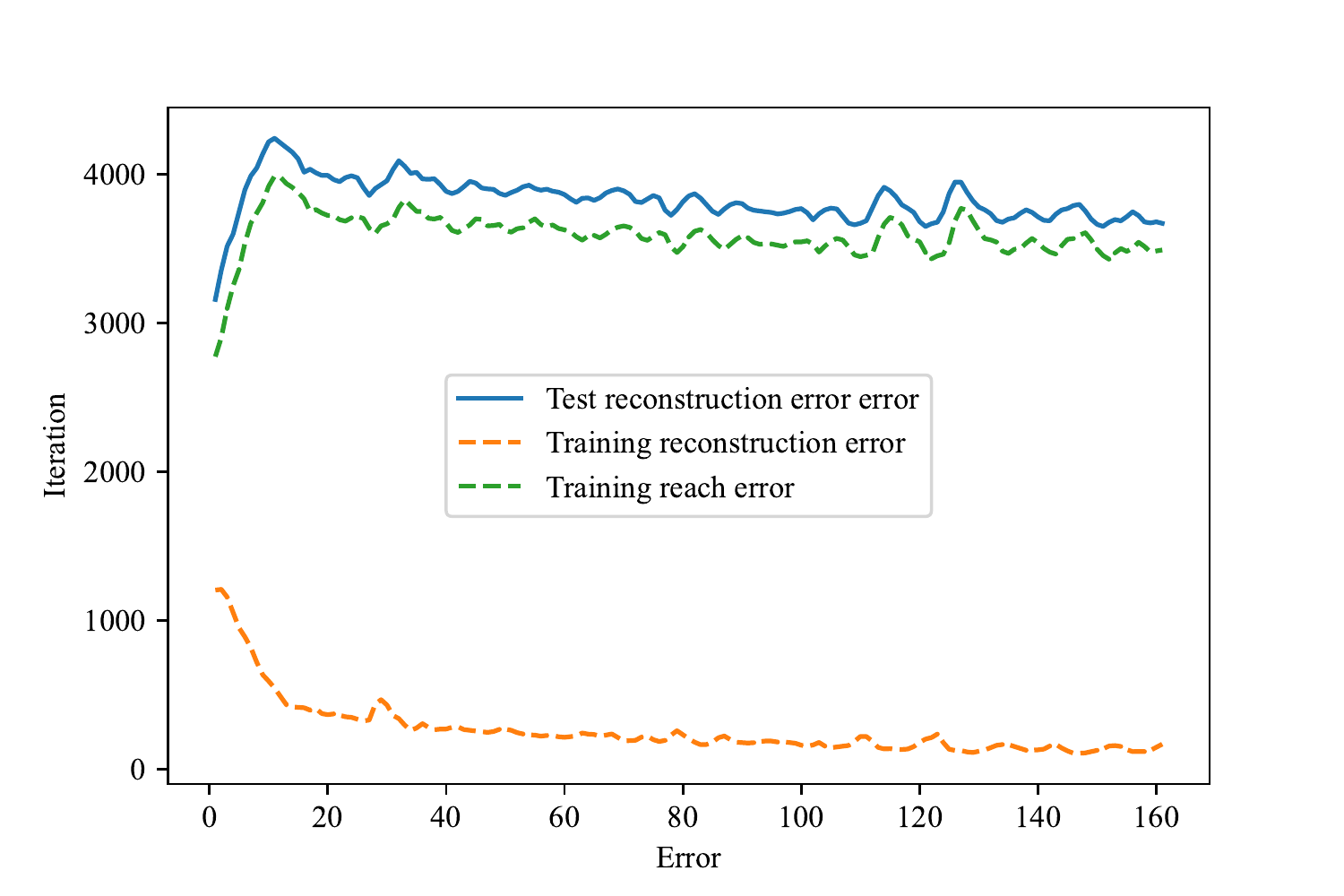}
    \caption{The plot shows 160 iterations of reach regularization of the autoencoder trained on the MNIST dataset, as in Section~\ref{section:reach_reg:Mnist}. The blue line shows the average reconstruction error on the test set, the green line shows the average reconstruction error on the training set, and the orange line shows the reach loss.}
\end{figure}

\end{document}